\def\sec{\textsc{SparseMix}}
\renewcommand{\algorithmiccomment}[1]{\bgroup\hfill//~#1\egroup}
\def\1{\mathds{1}}
\def\X{\mathcal{X}}
\def\Y{\mathcal{Y}}
\def\Q{\mathcal{Q}}
\newtheorem{theorem}{Theorem}[section]
\theoremstyle{definition}
\newtheorem{definition}{Definition}[section]
\newtheorem{remark}{Remark}[section]
\newtheorem{example}{Example}[section]
\newcommand\xor{\mathrm{xor}}
\newcommand\D{\mathrm{D}}
\newcommand\cost{\mathrm{cost}}
\newcommand\code{\mathrm{code}}
\title{Efficient mixture model for clustering of sparse high dimensional binary data}
\author{Marek \'Smieja \and Krzysztof Hajto \and Jacek Tabor}
\date{\normalsize $^{1}$Faculty of Mathematics and Computer Science\\ Jagiellonian University\\ \L{}ojasiewicza 6, 30-348 Krakow, Poland}
\begin{document}

\maketitle

\begin{abstract}
In this paper we propose a mixture model, {\sec}, for clustering of sparse high dimensional binary data, which connects model-based with centroid-based clustering. Every group is described by a representative and a probability distribution modeling dispersion from this representative. 

In contrast to classical mixture models based on EM algorithm, {\sec}:
\begin{itemize}
\item[-] is especially designed for the processing of sparse data,
\item[-] can be efficiently realized by an on-line Hartigan optimization algorithm, 
\item[-] is able to automatically reduce unnecessary clusters.
\end{itemize} We perform extensive experimental studies on various types of data, which confirm that {\sec} builds partitions with higher compatibility with reference grouping than related methods. Moreover, constructed representatives often better reveal the internal structure of data.
\end{abstract}

\section{Introduction}

Sparse high-dimensional binary representations became very popular in various domains including text mining, cheminformatics, biology, etc. \cite{juan2002use, klekota2008chemical, franti2003classification}. Binary features are usually used to indicate whether an object contains a predefined pattern or satisfies a given rule, e.g. one can represent a sentence by its words (set-of-words) \cite{baker1998distributional} or identify a chemical compound by its chemical structures (fingerprint) \cite{ewing2006novel}. Therefore, efficient processing and learning from such data is of great practical importance.


In this paper we introduce a version of model-based clustering, {\sec}, which efficiently processes high-dimensional sparse binary data\footnote{An implementation of {\sec} algorithm, together with some example data sets, is available on GitHub: \url{https://github.com/hajtos/SparseMIX}.}. Our model splits the data into groups which can be efficiently compressed by a collection of coding algorithms; each algorithm is designed for encoding elements within a single cluster (Section \ref{sec:model}). Since the code-lengths directly depend on the associated probability distribution, the elements generated from similar distributions are grouped together and consequently we obtain the effect of model-based clustering. In contrast to classical mixture models using Bernoulli variables or latent trait models, our model is designed for sparse data and can be optimized by an on-line Hartigan algorithm, which converges faster and finds better solutions than batch procedures like EM  (Section \ref{sec:algo}).

{\sec} builds a bridge between mixture models and centroid-based clustering, and describes every cluster by its representative (a single vector characterizing the most popular cluster patters) and probability distribution modeling dispersion from this representative. The relationship between the form of the representative and the associated cluster distribution is controlled by an additional parameter of the model. By placing a parameter selection problem on solid mathematical ground, we show that we can move from a model providing the best compression rate of data to the one obtaining high speed performance (Section \ref{sec:algo} and Theorem \ref{thm:compression}). Our method can automatically discover the number of groups by introducing a well-justified cost of cluster identification. We present a theoretical and experimental analysis how the number of clusters depends on the main characteristics of the data set (Example \ref{ex:reduction}). 

\begin{figure}[t]
\centering
\includegraphics{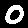} \includegraphics{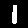} 
\includegraphics{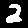} \includegraphics{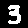} 
\includegraphics{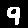} \includegraphics{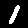} 
\includegraphics{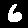} \includegraphics{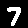} 
\includegraphics{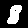} \includegraphics{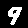}
\caption{Representatives of handwritten digits from MNIST database produced by {\sec}.}
\label{fig:intro}
\end{figure}

The paper contains extensive experiments performed on various types of data, including text corpora, chemical and biological data sets, as well as the MNIST image database. The results show that {\sec} gives higher compatibility with reference partition than existing methods based on mixture models and similarity measures. Most of clusters representatives obtained by {\sec} on MNIST data set correspond to distinct digits, which confirms high quality of its results, see Figure \ref{fig:intro}. Its running time is significantly lower than in related model-based algorithms and comparable to methods implemented in the Cluto package, which are optimized for processing large data sets. 

The paper is organized as follows. The next section contains a brief description of related clustering methods. The {\sec} model is introduced in Section \ref{sec:model}. Section \ref{sec:algo} presents a practical implementation of our method. Experiments are included in Section \ref{sec:experiments} and finally the conclusion is given.

\section{Related work}

A lot of clustering methods are expressed in a geometric framework, where similarity between objects is defined with the use of the Euclidean metric, e.g. $k$-means \cite{macqueen1967some}. Although a geometric description of clustering can be insightful for continuous data, it becomes less informative in the case of high dimensional binary (or discrete) vectors, where the Euclidean distance is not natural. To adapt these approaches to binary data sets, the authors consider, for instance, $k$-medoids or k-modes \cite{huang1998extensions, chan2004optimization} with dissimilarity measure designed for this special type of data, such as Hamming, Jaccard or Tanimoto measures \cite{li2005general}. Evaluation of possible dissimilarity metrics for categorical data can be found in \cite{dos2015categorical, bai2011novel}. To obtain a more flexible structure of clusters one can also use hierarchical methods \cite{zhao2002evaluation} or density-based clustering \cite{wen2002query}. 

Model-based clustering \cite{mclachlan2004finite}, where data  is modeled as a sample from a parametric mixture of probability distributions, is commonly used for grouping continuous data using Gaussian models, but has also been adapted for processing binary data. In the simplest case, the probability model of each cluster is composed of a sequence of independent Bernoulli variables (or multinomial distributions) describing the probabilities on subsequent attributes \cite{celeux1991clustering, juan2002use}. Since many attributes usually are statistically irrelevant and independent of true categories, they may be removed or associated with small weights \cite{graham2006unsupervised, bouguila2010multivariate}. This partially links  mixture models with subspace clustering of discrete data \cite{yamamoto2015clustering, chen2016soft}. Since the use of multinomial distributions formally requires independence of attributes, different smoothing techniques were proposed, such as applying Dirichlet distributions as a prior to the multinomial \cite{bouguila2009discrete}. Another version of mixture models for binary variables tries to maximize the probability that data points are generated around cluster centers with the smallest possible dispersion \cite{celeux1991clustering}. This technique is closely related to our approach, however, our model allows for using any clusters representatives (not only cluster centers), is significantly faster due to the use of sparse coders and can automatically deduce the number of clusters.

A mixture of latent trait analyzers is a specialized type of mixture model for categorical data, where a continuous univariate of a multivariate latent variable is used to describe the dependence in categorical attributes \cite{vermunt2007multilevel, gollini2014mixture}. Although this technique recently received high interest in the literature \cite{langseth2009latent, cagnone2012factor}, it is potentially difficult to fit the model, because the likelihood function involves an integral that cannot be evaluated analytically. Moreover, its use is computationally expensive for large high dimensional data sets \cite{tang2015}.

Information-theoretic clustering relies on minimizing the entropy of a partition or maximizing the mutual information between data and its discretized form \cite{li2004entropy, tishby2000information, dhillon2003information}. Although both approaches are similar and can be explained as a minimization of coding cost, the first creates ``hard clusters'', where an instance is classified to exactly one category, while the second allows for soft assignments \cite{Strouse_DIB}. Information-theoretic clustering was combined with model selection criteria as MDLP (minimum description length principle) \cite{rissanen1985minimum} or MML (minimum message length) \cite{wallace1968information} to add a model complexity to the clustering objective function \cite{barbara2002coolcat}. The authors of \cite{celeux1991clustering} established a close connection between entropy-based techniques for discrete data and model-based clustering using Bernoulli variables. In particular, entropy criterion can be formally derived using a classification maximum likelihood.

To the best authors knowledge, neither model-based clustering nor information-theoretic methods have been optimized for processing sparse high-dimensional binary data. Our method can be seen as a combination of $k$-medoids with model-based clustering (in the sense that it describes a cluster by a single representative and a multivariate probability model), which is efficiently realized for sparse high-dimensional binary data. It is worth to mention the Cluto package \cite{karypis2002cluto}, which is a practical clustering software designed especially for processing sparse high dimensional data. The Cluto package is built on a sophisticated multi-level graph partitioning engine and offers many different criteria that can be used to derive both partitional, hierarchical and spectral clustering algorithms.

\section{Clustering model} \label{sec:model}

The goal of clustering is to split the data into groups that contain elements characterized by similar patters. In our approach the elements are similar if they can be efficiently compressed by the same algorithm. We begin this section with presenting a model for compressing elements within a single cluster. Next, we combine these partial encoders and define a final clustering objective function. 

\subsection{Compression model for a single cluster} \label{sec:single}

Let us assume that $X \subset \{0,1\}^D$ is a data set (cluster) containing $D$-dimensional binary vectors. We implicitly assume that $X$ represents sparse data, i.e. the number of positions occupied by non-zero bits is relatively low.

A typical way for encoding such data is to directly remember the values at each coordinate \cite{barbara2002coolcat, li2004entropy}. Since, in practice, $D$ is often large, this straightforward technique might be computationally inefficient. Moreover, due to the sparsity of data, positions occupied by zeros are not very informative while the less frequent non-zero bits carry substantial knowledge. Therefore, instead of remembering all the bits of every vector, it might be more convenient to encode positions occupied by non-zero values. It occurs that this strategy can be efficiently implemented by on-line algorithms.

To realize the aforementioned goal, we first select a representative (prototype) $m \in \{0,1\}^D$ of a cluster $X$. Next, for each data point $x = (x_1,\ldots,x_D) \in X$ we construct a corresponding vector 
$$
\xor(x,m) = (|x_1-m_1|, \ldots,|x_D-m_D|) \in \{0,1\}^D
$$ 
of differences with $m$. If a representative is chosen as the most probable point of a cluster (mean of a cluster), then the data set of differences will be sparser on average than the original data set $X$. An efficient way for storing such sparse data relies on encoding the numbers of coordinates with non-zero bits. Concluding, the original data $X$ is compressed by remembering a representative and encoding resulting vectors of differences in an efficient manner, see Figure \ref{fig:coding2}.

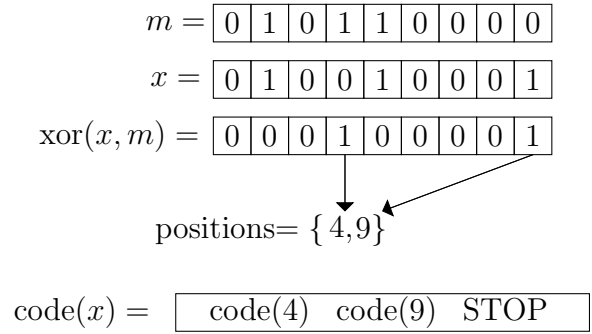
\begin{figure}[t]
\centering
\begin{tikzpicture}
\node(bit 1) at (0.5,1.75){$m=$};
\draw (1,2.0) rectangle (1.5,1.5);
\node(bit 1) at (1.25,1.75){0};
\draw (1.5,2.0) rectangle (2.0,1.5);
\node(bit 1) at (1.75,1.75){1};
\draw (2.0,2.0) rectangle (2.5,1.5);
\node(bit 1) at (2.25,1.75){0};
\draw (2.5,2.0) rectangle (3.0,1.5);
\node(bit 1) at (2.75,1.75){1};
\draw (3.0,2.0) rectangle (3.5,1.5);
\node(bit 1) at (3.25,1.75){1};
\draw (3.5,2.0) rectangle (4.0,1.5);
\node(bit 1) at (3.75,1.75){0};
\draw (4.0,2.0) rectangle (4.5,1.5);
\node(bit 1) at (4.25,1.75){0};
\draw (4.5,2.0) rectangle (5,1.5);
\node(bit 1) at (4.75,1.75){0};
\draw (5.0,2.0) rectangle (5.5,1.5);
\node(bit 1) at (5.25,1.75){0};

\node(bit 1) at (0.5,1.0){$x=$};
\draw (1,1.25) rectangle (1.5,0.75);
\node(bit 1) at (1.25,1.0){0};
\draw (1.5,1.25) rectangle (2.0,0.75);
\node(bit 1) at (1.75,1.0){1};
\draw (2.0,1.25) rectangle (2.5,0.75);
\node(bit 1) at (2.25,1.0){0};
\draw (2.5,1.25) rectangle (3.0,0.75);
\node(bit 1) at (2.75,1.0){0};
\draw (3.0,1.25) rectangle (3.5,0.75);
\node(bit 1) at (3.25,1.0){1};
\draw (3.5,1.25) rectangle (4.0,0.75);
\node(bit 1) at (3.75,1.0){0};
\draw (4.0,1.25) rectangle (4.5,0.75);
\node(bit 1) at (4.25,1.0){0};
\draw (4.5,1.25) rectangle (5,0.75);
\node(bit 1) at (4.75,1.0){0};
\draw (5.0,1.25) rectangle (5.5,0.75);
\node(bit 1) at (5.25,1.0){1};

\node(bit 1) at (-0.25,0.25){$\xor(x,m)=$};
\draw (1,0) rectangle (1.5,0.5);
\node(bit 1) at (1.25,0.25){0};
\draw (1.5,0) rectangle (2.0,0.5);
\node(bit 1) at (1.75,0.25){0};
\draw (2.0,0) rectangle (2.5,0.5);
\node(bit 1) at (2.25,0.25){0};
\draw (2.5,0) rectangle (3.0,0.5);
\node(bit 1) at (2.75,0.25){1};
\draw (3.0,0) rectangle (3.5,0.5);
\node(bit 1) at (3.25,0.25){0};
\draw (3.5,0) rectangle (4.0,0.5);
\node(bit 1) at (3.75,0.25){0};
\draw (4.0,0) rectangle (4.5,0.5);
\node(bit 1) at (4.25,0.25){0};
\draw (4.5,0) rectangle (5,0.5);
\node(bit 1) at (4.75,0.25){0};
\draw (5.0,0) rectangle (5.5,0.5);
\node(bit 1) at (5.25,0.25){1};

\node(bit 1) at (1.35,-1.0){positions$=\{$};
 \draw[
        -triangle 90,
        line width=0.02mm,
        postaction={draw, line width=0.01cm, shorten >=0.1cm, -}
    ] (2.75,0) -- (2.75,-0.75);
\node(bit 1) at (2.7,-1.0){$4,$};
 \draw[
        -triangle 90,
        line width=0.02mm,
        postaction={draw, line width=0.01cm, shorten >=0.1cm, -}
    ] (5.25,0) -- (3.25,-0.75);
\node(bit 1) at (3.1,-1.0){$9\}$};
\node(bit 1) at (-0.75,-2.1){$\code(x)=$};
\draw (0.5,-2.35) rectangle (6.0,-1.85);
\node(bit 1) at (3.2,-2.1){$\code(4)$ \enspace $\code(9)$ \enspace STOP};
\end{tikzpicture}
	\caption{Sparse data coding.}
	\label{fig:coding2} 

\end{figure}

We now precisely follow the above idea and calculate the cost of coding in this scheme, which will be the basis of our clustering criterion function. Let the distribution at the $i$-th coordinate of $x \in X$ be described by a Bernoulli random variable taking value 1 with a probability $p_i \in [0,1]$ and 0 with a probability $(1-p_i)$, i.e. $p_i = P(x_i = 1)$. For a fixed $T \in [0,1]$, we consider a representative $m = m(T) = (m_1,\ldots,m_D)$ defined by
$$
m_i = \left\{ \begin{array}{ll} 0, & p_i \leq T, \\ 1, & p_i > T,\end{array}\right.
$$
Although a representative $m(T)$ depends on $T$, we usually discard this parameter and simply write $m$, when $T$ is known from the context. The $i$-th coordinate of $X$ is more likely to attain value 1, if $p_i > \frac{1}{2}$ and, in consequence, for $T = \frac{1}{2}$ the representative $m$ coincides with the average vector (most probable point) of $X$.

Given a representative $m$, we consider the differences $\xor(x,m)$, for $x \in X$, and denote such a data set by 
$$
\D_m(X) = \{\xor(x,m): \text{ } x \in X\}. 
$$
The probability $q_i = q_i(T)$ of bit 1 at the $i$-th position in $D_m(X)$ equals 
$$
q_i = \left\{ \begin{array}{ll} p_i, & p_i \leq T, \\ 1-p_i, & p_i > T. \end{array}\right.
$$
Let us notice that $ q_i \leq p_i$, for $T \geq \frac{1}{2}$, which makes $\D_m(X)$ sparser than $X$, see Figure \ref{fig:diffprob}. 

\begin{figure}
\centering
\includegraphics[width=0.95\textwidth]{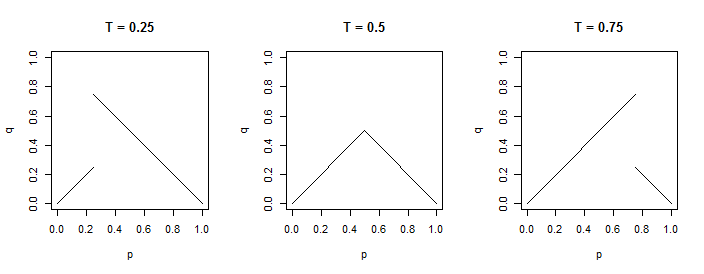}
\caption{Relation between probabilities $p_i$ and $q_i$.} \label{fig:diffprob}
\end{figure}

To design an efficient coder for sparse data, let us consider a probability distribution $\Q = \Q(T) = \{Q_1,\ldots,Q_D\}$ on the set of coordinates $\{1,\ldots,D\}$ describing a distribution of positions with non-zero bits in $\D_m(X)$. A number $Q_i$ is a conditional probability that the $i$-th position holds value 1 if at least one coordinate in a vector is non-zero, i.e.
$$
Q_i = P\left(|x_i-m_i| = 1 \, \middle| \, \sum_{i=1}^D x_i > 0\right) = \frac{q_i}{Z},
$$
where $Z = Z(T) = \sum_{j=1}^D q_j$ is a normalization factor. In practice, it is convenient to restrict the attention to the support of $\Q$ (non-zero probabilities), because we usually have $Q_i = 0$ for most $i$.

The Shannon entropy theory states that the code-lengths in an optimal prefix-free coding depend strictly on the associated probability distribution \cite{cover2012elements}. Given a distribution $\Q$ of positions with bit 1 it is possible to construct at most $D$ codes, each with the length\footnote{in the limiting case} $-\log Q_i$ (we generate codes only for $Q_i > 0$). The short codes correspond to the most frequent symbols, while the longest ones are related with rare objects. Given an arbitrary element $d=(d_1,\ldots,d_D) \in \D_m(X)$ we encode its non-zero coordinates and obtain that its compression requires
$$
\sum_{\substack{i: d_i = 1 \\ \,\,\, Q_i > 0}} -\log Q_i
$$
bits. 

This leads to the {\sec} objective function for a single cluster, which gives the average cost of compressing a single element of $X$ by our sparse coder:
\begin{definition} (\em{one cluster cost function})
Let $X \subset \{0,1\}^D$ be a data set and let $T \in [0,1]$ be fixed. {\sec} objective function for a single cluster is given by\footnote{We put: $0\cdot \log 0 = 0$}:
\begin{equation} \label{eq:ourCost}
\cost_T(X) = \cost(\D_m(X))  = \sum_{i = 1}^D q_i (-\log Q_i).
\end{equation}
\end{definition}
Observe that, given probabilities $p_1,\ldots,p_D$, the selection of $T$ determines the form of $m$ and $\D_m(X)$.

\begin{remark}
To be able to decode the initial data set, we would also need to remember the probabilities $p_1,\ldots,p_D$ determining the form of the representative $m$ and the corresponding probability $\Q$ used for constructing the codewords. These are the model parameters, which in practical coding scheme are stored once in the header. Since they do not affect the asymptotic value of data compression, we do not include them in the final cost function\footnote{Nevertheless. these probabilities should be accounted in model selection criteria as AIC or BIC.}. 

Moreover, to reconstruct the original data we should distinguish the encoded representations of subsequent vectors. It could be realized by reserving an additional symbol for separating two encoded vectors or by remembering the number of non-zero positions in every vector. Although this is necessary in the coding task, it is less important for clustering and therefore we decided not to include it in the definition.
\end{remark}

The following theorem shows that $T=\frac{1}{2}$ provides the best compression rate of a single cluster. 
\begin{theorem} \label{thm:compression}
Let $X \subset \{0,1\}^D$ be a data set and let $\frac{1}{2} \leq T_1 \leq T_2 \leq 1$ be fixed. If $m(T_1), m(T_2)$ are two representatives and the mean number of non-zero bits in $\D_{m_1}(X)$ is not lower than 1, i.e. $Z(T_1) \geq 1$, then:
$$
\cost_{T_1}(X) \leq \cost_{T_2}(X).
$$
\end{theorem}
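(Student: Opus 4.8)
The plan is to rewrite the single‑cluster cost in a form where the $T$‑dependence is transparent, and then to pass from $T_1$ to $T_2$ one coordinate at a time. Writing $Q_i=q_i/Z$ and expanding the logarithm gives
$$
\cost_T(X)=\sum_{i=1}^D q_i(-\log q_i)+Z\log Z ,
$$
so the cost splits as an (unnormalised) entropy term $\sum_i f(q_i)$, with $f(t)=-t\log t$, plus $g(Z)=Z\log Z$. As $T$ grows on $[T_1,T_2]$, a coordinate $i$ changes only when $T$ crosses $p_i$: for $i$ with $T_1<p_i\le T_2$ the bit $m_i$ switches from $1$ to $0$ and, since $p_i>T_1\ge\tfrac12$, the value $q_i$ rises from $1-p_i$ to $p_i$, while all other $q_i$ are unchanged. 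In particular $Z(T)$ is non‑decreasing on $[T_1,T_2]$, hence $Z(T)\ge Z(T_1)\ge 1$ throughout.

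First I would reduce to a single switch. Order the finitely many coordinates with $p_i\in(T_1,T_2]$ by increasing $p_i$ (flipping any tied coordinates simultaneously), and insert intermediate thresholds between consecutive values; this realises the passage from $T_1$ to $T_2$ as a chain $m(T_1)=m^{(0)},m^{(1)},\ldots,m^{(k)}=m(T_2)$ in which each step flips a single coordinate, each $m^{(\ell)}$ equals $m(T)$ for some $T\in[T_1,T_2]$, and the costs telescope. So it suffices to prove the following one‑coordinate inequality: if $\D_{m'}(X)$ agrees with $\D_{m}(X)$ at every coordinate except $j$, where the bit‑probability $q_j$ is replaced by a larger value $q_j+\Delta\le 1$ (so the normaliser becomes $Z+\Delta$) and $Z\ge 1$, then $\cost(\D_{m'}(X))\ge\cost(\D_{m}(X))$.

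To establish this one‑coordinate step I would compute the difference directly. By the split above it equals $\bigl(f(q_j+\Delta)-f(q_j)\bigr)+\bigl(g(Z+\Delta)-g(Z)\bigr)$, and since $f'(t)=-\log t-\log e$ and $g'(z)=\log z+\log e$ the $\log e$ terms cancel after integrating, leaving the single integral
$$
\int_0^{\Delta}\Bigl(\log(Z+s)-\log(q_j+s)\Bigr)\,ds=\int_0^{\Delta}\log\frac{Z+s}{q_j+s}\,ds ,
$$
which is $\ge 0$ because $Z=\sum_i q_i\ge q_j$. (Alternatively, avoiding the integral: $g(Z+\Delta)-g(Z)\ge\Delta\,g'(Z)\ge\Delta\log e$ by convexity of $g$ together with $Z\ge1$, while $f(q_j+\Delta)-f(q_j)=\int_{q_j}^{q_j+\Delta}(-\log t-\log e)\,dt\ge-\Delta\log e$ since $-\log t\ge0$ on $(0,1]$; this is precisely where the hypothesis $Z\ge1$ is used.) Summing over the chain yields $\cost_{T_1}(X)\le\cost_{T_2}(X)$.

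The real work, I expect, is this one‑coordinate inequality, and the subtlety is that $f(t)=-t\log t$ is not monotone on $[0,1]$: it increases up to $t=1/e$ and decreases afterwards, so raising $q_j$ may actually lower the entropy term $\sum_i f(q_i)$, and one genuinely has to show the increase of $Z\log Z$ compensates. Making that trade‑off precise — either through the integral identity, or through the convexity estimate in which $g'(Z)=\log Z+\log e\ge\log e$ (valid exactly when $Z\ge1$) dominates the worst‑case drop $-\Delta\log e$ of $f$ — is the heart of the argument. The reduction to a single switch and the structural observations about $q_i$ and $Z$ are routine once $\cost_T(X)$ has been written as $\sum_i f(q_i)+Z\log Z$.
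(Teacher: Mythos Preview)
Your proof is correct and takes a genuinely different route from the paper's. The paper computes $\cost_{T_2}(X)-\cost_{T_1}(X)$ in one shot, splits the coordinates into the three ranges $p_i\le T_1$, $T_1<p_i\le T_2$, $p_i>T_2$, discards the contribution of the outer two ranges as manifestly non-negative (it is $\log\frac{Z(T_2)}{Z(T_1)}$ times a non-negative sum), and on the middle range reduces matters to showing that
\[
f(p)=p(-\log p)-(1-p)(-\log(1-p))+p\log Z(T_2)-(1-p)\log Z(T_1)
\]
is non-negative for $\tfrac12\le T_1\le p\le T_2$. This is done by proving $f'(p)\ge 0$ via $Z(T_1)Z(T_2)\ge 1\ge 4p(1-p)$ (which is exactly where $Z(T_1)\ge 1$ enters) and then checking $f(\tfrac12)\ge 0$.

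Your argument is more structural: the decomposition $\cost_T(X)=\sum_i(-q_i\log q_i)+Z\log Z$ plus the one-coordinate telescoping isolates the mechanism cleanly, and your integral identity $\int_0^\Delta\log\frac{Z+s}{q_j+s}\,ds\ge 0$ in fact only uses $Z\ge q_j$, which always holds. So your integral route actually proves the theorem \emph{without} the hypothesis $Z(T_1)\ge 1$; only your alternative convexity estimate needs it, and that estimate is the moral analogue of the paper's $f'\ge 0$ step. One minor clean-up: for tied $p_i$ you may simply flip them one at a time in any order---the intermediate representatives need not be of the form $m(T)$, since your one-step inequality never uses that.
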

\begin{proof}
See \ref{app:compression}
\end{proof}

Although the best model for compression is given by $T= \frac{1}{2}$, the alternative choices for $T$ might sometimes yield better clustering results. In particular, the greater $T$ is, the faster updates in clustering algorithm we obtain (see Section \ref{sec:algo}), which might be a crucial issue in practical usage.

\subsection{Clustering criterion function}

A single encoder allows us to compress simple data. To efficiently encode real data sets, which usually origin from several sources, it is profitable to construct multiple coding algorithms, each designed for one homogeneous part of data. Finding an optimal set of algorithms leads to a natural division of data, which is a basic idea behind our model. Below, we describe the construction of our clustering objective function, which combines partial cost functions of clusters.

Let us assume that we are given a partition of $X$ containing $k$ groups $X_1,\ldots,X_k$ (pairwise disjoint subsets of $X$ such that $X = X_1 \cup \ldots \cup X_k$), where every subset $X_i$ is described by its own coding algorithm. Observe that to encode an instance $x \in X_i$ by such a multiple coding model one should remember its group identifier and the code of $x$ defined by the $i$-th encoder, i.e.,
\begin{equation}\label{code}
\code(x) = [\code(i), \code_i(x)].
\end{equation}
Such a strategy enables unique decoding, because a retrieved coding algorithm allows subsequently for discovering an instance (see Figure \ref{fig:coding1}). The compression procedure should find a division of $X$ and design  $k$ coding algorithms, which minimize the expected length of code given by \eqref{code}. 

\begin{figure}[t]
\centering
\begin{tikzpicture}
\node(bit 1) at (-0.75,0.5){$\code(x)=$};
\draw (0.5,0.3) rectangle (5.0,0.8);
\node(bit 1) at (2.7,0.5){$\code($i-th encoder$)$};
\draw (5.0,0.3) rectangle (7.6,0.8);
\node(bit 1) at (6.3,0.5){$\code_{i}(x)$};

\draw[decorate,decoration={brace,mirror}]  (0.6,0.05) -- node[below] {encoder (cluster)} (4.9,0.05);
\node(bit 1) at (2.6,-0.75){identification};
\draw[decorate,decoration={brace,mirror}]  (5.1,0.05) -- node[below] {codes defined by} (7.5,0.05);
\node(bit 1) at (6.2,-0.75){$i$-th encoder};
\end{tikzpicture}
	\caption{Multi-encoder model.}
	\label{fig:coding1} 
\end{figure}

The coding algorithms for each cluster are designed as described in previous subsection. More precisely, let $p_i=(p^i_1,\ldots,p^i_D)$ be a vector, where $p_j^i$ is a probability that the $j$-th coordinate in the $i$-th cluster is non-zero, for $i=1,\ldots,k$. Next, given a fixed $T$, for each cluster $X_i$ we construct a representative $m_i = (m^i_1,\ldots,m^i_D)$ and calculate the associated probability distributions $q_i = (q^i_1,\ldots,q^i_D)$ and $\Q_i=\{Q^i_1,\ldots,Q^i_D\}$ on the set of differences $\D_{m_i}(X_i)$. The average code-length for compressing a single vector in the $i$-th cluster is given by (see \eqref{eq:ourCost}):
\begin{equation} \label{eq:costSingle}
\cost_T(X_i) = \cost(\D_{m_i}(X_i))  = \sum_{j = 1}^D q^i_j (-\log Q^i_j).
\end{equation}

To remember clusters identifiers, we again follow Shannon's theory of coding. Given a probability $P_i = P(x \in X_i)$ of generating an instance from a cluster $X_i$ (the prior probability), the optimal code-length of the $i$-th identifier is given by
\begin{equation}\label{eq:costId2}
\cost(i) = -\log P_i. 
\end{equation}
Since the introduction of any new cluster increases the cost of clusters identification, it might occur that maintaining a smaller number of groups is more profitable. Therefore, this model will have a tendency to variate the sizes of clusters and, in consequence, some groups might finally disappear (can be reduced).  

The {\sec} cost function combines the cost of clusters identification with the cost of encoding their elements. To add higher flexibility to the model, we introduce an additional parameter $\beta$, which allows to weight the cost of clusters identification. Specifically, if the number of clusters is known a priori, we should put $\beta = 0$ to prevent from reducing any groups. On the other hand, to encourage the model to remove clusters we can increase the value of $\beta$. By default $\beta = 1$, which gives a typical coding model:
\begin{definition} \label{def:secCost} (\em{clustering cost function})
Let $X = \{0,1\}^D$ be a data set of $D$-dimensional binary vectors and let $X_1,\ldots,X_k$ be a partition of $X$ into pairwise disjoint subsets. For fixed $T \in [0,1]$ and $\beta \geq 0$ the {\sec} clustering objective function equals:
\begin{equation}\label{eq:pecCost}
\cost_{\beta,T}(X_1,\ldots,X_k) = \sum_{i=1}^k P_i \cdot \left(\cost_T(X_i) + \beta \cdot \cost(i)\right),
\end{equation}
where $P_i$ is the probability of a cluster $X_i$, $\cost(i)$ is the cost of encoding its identifier \eqref{eq:costId2} and $\cost_T(X_i)$ is the average code-length of compressing elements of $X_i$ \eqref{eq:costSingle}.
\end{definition}

As can be seen, every cluster is described by a single representative and a probability distribution modeling dispersion from a representative. Therefore, our model can be interpreted as a combination of $k$-medoids with model-based clustering. It is worth to comment that for $T=1$, we always get a representative $m = 0$. In consequence, $\D_0(X) = X$ and a distribution in every cluster is directly fitted to original data. 

The cost of clusters identification allows us to reduce unnecessary clusters. To get more insight into this mechanism, we present the following example. For simplicity, we use $T = \beta = 1$.
\begin{example} \label{ex:reduction}
By $P(p,\alpha,d)$, for $p, \alpha \in [0,1]$ and $d \in \{0,\ldots,D\}$, we denote a $D$-dimensional probability distribution, which generates bit 1 at the $i$-th position with probability:
\begin{equation} \label{eq:pMix}
p_i =  \left\{ 
\begin{array}{ll}
\alpha p, & i =1,\ldots,d,\\
(1-\alpha) p, & i = d+1,\dots,D.
\end{array}
\right.
\end{equation}
Let us consider a data set generated by the mixture of two sources:
\begin{equation}\label{eq:mixData}
\omega P(p,\alpha,d) + (1-\omega) P(p,1-\alpha,d),
\end{equation}
for $\omega \in [0,1]$.

To visualize the situation we can arrange a data set in a matrix, where rows correspond to instances generated by the mixture components, while the columns are related to their attributes:
$$
\begin{aligned}
\begin{array}{r@{}}
    \omega~\{\hspace{\nulldelimiterspace} \\
    1-\omega~\{\hspace{\nulldelimiterspace}
  \end{array}
\Big(\smash{\underbrace{
\begin{array}{c}
    \alpha p\\
    (1-\alpha) p
\end{array}}_{d}}
\smash{\underbrace{
\begin{array}{c}
    (1-\alpha) p\\
    \alpha p\\
\end{array}}_{D-d}}\Big)\\[15pt]
\end{aligned}
$$
The matrix entries show the probability of generating bit~1 at a given coordinate belonging to one of four matrix regions. The parameter $\alpha$ determines how similar are the instances generated from the underlying distributions. For $\alpha = \frac{1}{2}$, both components are identical, while for $\alpha \in \{0,1\}$ we get their perfect distinction. 

\begin{figure}[t]
\centering
\subfigure[]{\label{fig:dimFix} \includegraphics[width=0.3\textwidth]{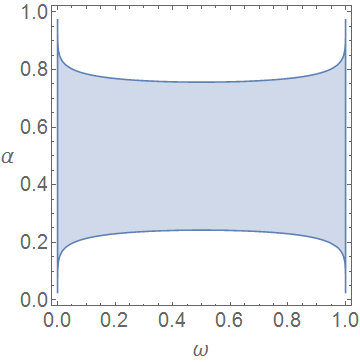}}
\subfigure[]{\label{fig:omegaFix} \includegraphics[width=0.3\textwidth]{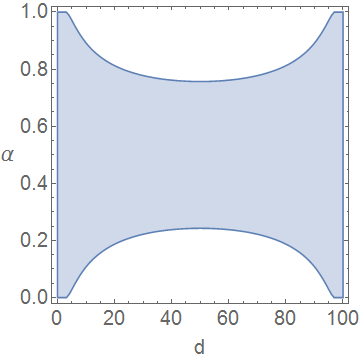}}
\subfigure[]{\label{fig:bothFix} \includegraphics[width=0.3\textwidth]{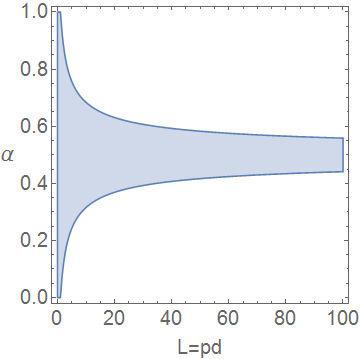}}
\caption{Optimal number of clusters for data generated by the mixture of sources given by \eqref{eq:mixData}. Blue regions show the combinations of mixture parameters which lead to one cluster, while white areas correspond to two clusters. \ref{fig:dimFix} presents the case when every source is characterized by the same number of bits $d = \frac{1}{2} D$, \ref{fig:omegaFix} corresponds to the situation when each source produces the same number of instances, while \ref{fig:bothFix} is the combination of both previous cases.}
\end{figure}

We compare the cost of using a single cluster for all instances with the cost of splitting the data into two optimal groups (clusters are perfectly fitted to the sources generating the data). For the reader's convenience, we put the details of the calculations in \ref{app:reduction}. The analysis of the results is presented below. We consider three cases:
\begin{enumerate}
\item {\bf Sources are characterized by the same number of bits.} The influence of $\omega$ and $\alpha$ on the number of clusters, for a fixed $d = \frac{1}{2}D$, is presented in Figure \ref{fig:dimFix}. Generally, if sources are well-separated, i.e. $\alpha \notin (0.2,0.8)$, then {\sec} will always create two clusters regardless of the mixing proportion. This confirms that {\sec} is not sensitive to unbalanced sources generating the data if only they are distinct. 

\item {\bf Sources contain the same number of instances.} The Figure \ref{fig:omegaFix} shows the relation between $d$ and $\alpha$ when the mixing parameter $\omega=\frac{1}{2}$. If one source is identified by a significantly lower number of attributes than the other ($d << D$), then {\sec} will merge both sources into a single cluster. Since one source is characterized by a small number of features, it might be more costly to encode the cluster identifier than its attributes. In other words, the clusters are merged together, because the cost of cluster identification outweighs the cost of encoding the source elements.

\item {\bf Both proportions of dimensions and instances for the mixture sources are balanced.} If we set equal proportions for source and dimension coefficients, then the number of clusters depends on the average number of non-zero bits in the data $L = p d$, see Figure \ref{fig:bothFix}. For high density of data, we can easily distinct the clusters and, in consequence, {\sec} will end up with two clusters. On the other hand, in the case of sparse data, we use less memory for remembering its elements and the cost of clusters identification grows up with respect to the cost of encoding the elements within the groups.

\end{enumerate}

\end{example}

\section{Fast optimization algorithm} \label{sec:algo}

In this section, we present an efficient on-line algorithm for optimizing the {\sec} cost function. Before that, let us first show how to estimate the probabilities involved in formula \eqref{eq:pecCost}.

\subsection{Estimation of cost function}

We assume that a data set $X \subset \{0,1\}^D$ is split into $k$ groups $X_1,\ldots,X_k$, where $n=|X|$ and $n_i = |X_i|$. Let us denote by 
$$
n^i_j = \sum_{x \in X_i} x_j
$$ 
the number of objects in $X_i$ with the $j$-th position occupied by value $1$. This allows us to estimate the probability $p^i_j$ of bit 1 at the $j$-th coordinate in $X_i$ as 
$$
p^i_j = \frac{n^i_j}{n_i}
$$
and consequently rewrite a representative $m_i = (m^i_1,\ldots,m^i_D)$ of the $i$-th cluster as
$$
m^i_j = \left\{\begin{array}{ll}
0, & \frac{n^i_j}{n_i} \leq T ,\\
1, & \frac{n^i_j}{n_i} > T.
\end{array}
\right.
$$

To calculate the formula for $\cost_T(X_i)$, we first estimate the probability $q_j^i$ of bit $1$ at the $j$-th coordinate in $\D_{m_i}(X_i)$,
$$
q^i_j = \left\{\begin{array}{ll}
\frac{n^i_j}{n_i}, & \frac{n^i_j}{n_i} \leq T ,\\
\frac{n_i - n^i_j}{n_i}, & \frac{n^i_j}{n_i} > T.
\end{array}
\right.
$$
If we denote by 
\begin{equation}\label{eq:Nij}
N^i_j = \left\{\begin{array}{ll}
n_j^i, & \frac{n^i_j}{n_i} \leq T ,\\
n_i-n^i_j, & \frac{n^i_j}{n_i} > T
\end{array}
\right.
\end{equation}
the number of vectors in $\D_{m_i}(X_i)$ with the $j$-th coordinate occupied by bit 1 and by
$$
S_i = \sum_{j=1}^D N^i_j
$$
the total number of non-zero entries in $\D_{m_i}(X_i)$, then we can estimate the probability $Q^i_j$ as:
$$
Q^i_j = \frac{N^i_j}{S_i}.
$$

This allows us to rewrite the cost function for a cluster $X_i$ as
\begin{align*}
\cost_T(X_i) & = \sum_{j=1}^D q^i_j (-\log Q^i_j) \\
& = \sum\limits_{j: p^i_j \leq T} \frac{n^i_j}{n_i} (-\log \frac{N^i_j}{S_i}) + \sum\limits_{j: p^i_j > T} (1-\frac{n^i_j}{n^i}) (-\log \frac{N^i_j}{S_i})\\
& = \frac{1}{n_i}  \sum\limits_{j= 1}^D N^i_j (-\log N^i_j + \log S_i)\\
& = \frac{1}{n_i}  \left(S_i \log S_i + \sum\limits_{j= 1}^D N^i_j (-\log N^i_j) \right).\\
\end{align*}
Finally, since the probability $P_i$ of the $i$-th cluster can be estimated as $P_i=\frac{n_i}{n}$, then the optimal code-length of a cluster identifier equals
$$
\cost(i) = -\log \frac{n_i}{n}.
$$
In consequence, the overall cost function is computed as:
\begin{align*}
\cost_{\beta, T}(X) & = \sum\limits_{i=1}^k \frac{n_i}{n} \left( \beta \cdot \cost(i) + \cost_T(X_i))\right)\\
&= \sum\limits_{i=1}^k \frac{n_i}{n} \left( \beta \cdot (-\log \frac{n_i}{n}) +\frac{1}{n_i} \left[S_i \log S_i + \sum\limits_{j= 1}^D N^i_j (-\log N^i_j) \right] \right)\\
& = \beta \log n + \frac{1}{n}\sum\limits_{i=1}^k  \left( \beta n_i(-\log n_i)  + S_i \log S_i + \sum\limits_{j= 1}^D N^i_j (-\log N^i_j) \right).
\end{align*}

\subsection{Optimization algorithm}

To obtain an optimal partition of $X$, the {\sec} cost function has to be minimized. Since it is not practically feasible to calculate its global minimum, one can use some iterative algorithms to find one of its local minima. In the present paper we adapt a modified version of the Hartigan procedure, which is commonly applied in an on-line version of $k$-means \cite{hartigan1979algorithm}. Although the complexity of a single iteration of Hartigan algorithm is often higher than in batch procedures such as EM, it converges in significantly lower number of iterations and usually finds better minima (see Section \ref{sec:experiments}).

The minimization procedure consists of two steps: initialization and iteration. In the initialization stage, $k \geq 2$ nonempty groups are formed in an arbitrary manner. In the simplest case, it could be a random initialization, but to obtain better results one can also apply a kind of $k$-means++ seeding. In the iteration step the elements are reassigned between clusters in order to minimize the value of the criterion function. Additionally, due to the cost of clusters identification some groups may lose their elements and finally disappear. In practice, a cluster is reduced if its size falls below a given threshold $\varepsilon \cdot |X|$, for a fixed $\varepsilon > 0$.

A detailed algorithm is presented below ($\beta$ and $T$ are fixed):
\begin{flushleft}
\footnotesize
\begin{algorithmic}[1] 
\STATE \textbf{INPUT:}
	\STATE $X \subset \{0,1\}^D$ -- data set
	\STATE $k$ -- initial number of clusters
	\STATE $\varepsilon > 0$ -- cluster reduction parameter
\STATE \textbf{OUTPUT:}\\
	\STATE Partition $\X$ of $X$\\
\STATE \textbf{INITIALIZATION:}\\
		\STATE $\Y = \{Y_1,\ldots,Y_k\}$ -- random partition of $X$ into $k$ groups 
\STATE \textbf{ITERATION:}\\
	\REPEAT
		\FORALL{$x \in X$}
			\STATE $Y_x \leftarrow$ get cluster of $x$
			\STATE $Y \leftarrow \mathrm{arg} \max\limits_{Y \in \Y} \{ \cost_T(Y_x) + \cost_T(Y) - \cost_T(Y_x \setminus \{x\}) - \cost_T(Y \cup \{x\}) \}$ \label{pseudo:rec1}
			\IF{$Y \neq Y_x$}
				\STATE switch $x$ from $Y_x$ to $Y$
				\STATE update probability models of $Y_x$ and $Y$\label{pseudo:rec2}
				\IF{$|Y_x| < \varepsilon \cdot |X|$}
					\STATE delete cluster $Y_x$ and assign its elements to these clusters which minimize the {\sec} cost function
				\ENDIF
			\ENDIF
		\ENDFOR
	\UNTIL{no switch for all subsequent elements of $X$}
\end{algorithmic}
\end{flushleft}
The outlined algorithm is not deterministic and depend on the initial partition. Therefore, the algorithm should be restarted multiple times to avoid getting stuck in local minima.

An efficient implementation of this algorithm requires fast updates of cluster models and recalculation of the {\sec} cost function after switching elements between clusters (see lines \ref{pseudo:rec1} and \ref{pseudo:rec2}). Below, we discuss the details of an efficient recalculation of this cost.

We start with showing how to update $\cost_T(X_i)$, when we add $x$ to a cluster $X_i$, i.e. how to compute $\cost_T(X_i \cup \{x\})$ given $\cost_T(X_i)$. The situation when we remove $x$ from a cluster is analogous. The updating of $n^i_j$ and $n_i$ is immediate (by a symbol with a hat $\hat{y}$ we denote the updated value of a variable $y$):
$$
\hat{n}^i_j = n^i_j + x_j \text{ and } \hat{n}_i = n_i + 1.
$$
In particular, $n^i_j$ only changes its value on these positions $j$, where $x_j$ is non-zero. 

Recalculation of $N^i_j$ is more complex, since it is calculated by using one of two formulas involved in \eqref{eq:Nij}, depending on the relation between $\frac{n^i_j}{n_i}$ and $T$. We consider four cases:
\begin{enumerate}
\item \label{case:LL} If $n^i_j \leq (n_i+1)T - 1$, then before and after the update we use the first formula of \eqref{eq:Nij}:
$$
\hat{N}^i_j = N^i_j + x_j.
$$
Moreover, this values changes only when $x_j = 1$. 
\item \label{case:RR} If $n^i_j > (n_i+1)T$, then before and after the update we use the second formula:
$$
\hat{N}^i_j = N^i_j + (1-x_j).
$$
It is changed only when $x_j = 0$. 
\item \label{case:L}If $x_j = 0$ and $n^i_j \in (n_iT, (n_i+1)T]$ then we switch from the second to the first formula and
$$
\hat{N}^i_j = n^i_j.
$$
Otherwise, it remains unchanged.
\item \label{case:R} If $x_j = 1$ and $n^i_j \in ((n_i+1)T - 1, n_iT]$ then we switch from the first to the second formula and
$$
\hat{N}^i_j = n_i - n^i_j.
$$ 
Otherwise, it remains unchanged.
\end{enumerate}
Due to the sparsity of $X$ there are only a few coordinates of $x$ satisfying $x_j = 1$. In consequence, the complexity of updates in the cases \ref{case:LL} and \ref{case:R} depends only on the number of non-zero bits in $X$. On the other hand, although $x_j = 0$ happens often, the situation when $n^i_j > n_iT$ is rare (for $T \geq \frac{1}{2}$), because $X$ is sparse. Since the cases \ref{case:RR} and \ref{case:L} cover a small number of coordinates, then they do not affect greatly on the complexity of the algorithm. Clearly, $S_i$ changes only if $N_i^j$ is changed as well.


Finally, to get the new cost of a cluster, we need to recalculate $\sum_{j= 1}^D N^i_j (-\log N^i_j)$. If we remember its old value $h(N^i_1,\ldots,N^i_D) = \sum_{j= 1}^D N^i_j (-\log N^i_j)$, then it is sufficient to update it on coordinates $j$ such that $N_i^j \neq \hat{N}_i^j$ by:
$$
h(\hat{N}^i_1,\ldots,\hat{N}^i_D)  = h(N^i_1,\ldots,N^i_D) - \sum_{j: N^i_j \neq \hat{N^i_j}} \left( \hat{N}^i_j (-\log \hat{N}^i_j) - N^i_j (-\log N^i_j)\right).
$$

We now analyze the computational cost of switching an element from one cluster to another. As discussed above, given $x \in X$ the recalculation of $N^i_j$, for $j=1,\ldots,D$, dominates the cost of updating any other quantity. Namely, we need to make updates on $c_i(x)$ coordinates, where:
\begin{equation} \label{eq:algoAnal}
\begin{array}{ll}
c_i(x) = c_{i,T}(x) & = |\{j: n^i_j \in ((n_i+1)T - 1, n_iT] \text{ and } x_j = 1\}|\\
& +|\{j: n^i_j \in (n_iT, (n_i+1)T] \text{ and } x_j = 0\}|\\
& +|\{j: n^i_j \leq (n_i+1)T - 1 \text{ and } x_j = 1 \}|\\
&+ |\{j: n^i_j > (n_i+1)T \text{ and } x_j = 0 \}|\\
& \leq |\{j:  x_j = 1 \}| + |\{j: n^i_j > (n_i+1)T -1  \}|\\
& \leq |\{j:  x_j = 1 \}| + |\{j: p^i_j > T - \frac{1-T}{n_i} \}|.\\
\end{array}
\end{equation}
Therefore, $c_i(x)$ is bounded from the above by the number of non-zero bits in $x$ and the number of coordinates where the probability $p^i_j$ of bit $1$ exceeds the threshold $T - \frac{1-T}{n_i}$. For $T=\frac{1}{2}$, this threshold equals $\frac{n_i-1}{2n_i}$, while for $T=1$ it attains value $1$ and, in consequence, $c_i(x)$ is exactly the number of coordinates with non-zero bits in $x$. It is also easy to see that $c_{i,T_1}(x) \geq c_{i, T_2}(x)$ if $\frac{1}{2} \leq T_1 < T_2$, i.e. the updates are faster for higher $T$.

\section{Experiments} \label{sec:experiments}

In this section we evaluate the performance of our algorithm and analyze its behavior in various clustering tasks. We compare its performance with related state-of-the-art methods. To denote our method we write $\sec(\beta,T)$, where $\beta$ and $T$ are the parameters of its cost function \eqref{def:secCost}.

We considered two types of Bernoulli mixture model. The first one is a classical mixture model, which relies on the maximum likelihood principle (ML) \cite{elmore2004estimating}. We used the R package ``mixtools'' \cite{benaglia2009mixtools} for its implementation. The second method is based on classification maximum likelihood (CML) \cite{li2004entropy}. While ML models every data point as a sample from a mixture of probability distributions, CML assigns every example to a single component. CML coincides with applying entropy as a clustering criterion \cite{celeux1991clustering}.

We also used two distance-based algorithms. The first one is $k$-medoids \cite{huang1998extensions}, which focuses on minimizing the average distance between data points and corresponding clusters' medoids (generalization of mean). We used R package ``cluster'' with Jaccard similarity measure\footnote{We also considered the Hamming and cosine distances, but the Jaccard distance provided the best results.}. We also considered Cluto software \cite{karypis2002cluto}, which is an optimized package for clustering large data sets. We ran the algorithm ``direct'' with a cosine distance function, which means that the package will calculate the final clustering directly, rather than bisecting the data multiple times. 

Since all of the aforementioned methods are non-deterministic and optimized in an iterative manner, each one was run 50 times with different initial partitions and the best result was selected according to the method's inner metric. 

\begin{table}[t]
\centering
\footnotesize
\caption{Summary of data sets used in experiments}
\label{tab:datasets}
\begin{tabular}{lcccc}
\hline

Dataset & Size & Dimensions & Avg. no. of non-zero bits & Classes\\ \hline
20newsgroups & 6997 & 26411 & 99.49 & 7 \\
farm-ads &     4143 & 54877 & 197.23 & 2 \\
questions &     5452 & 3029 & 4.04 & 6 \\
sentiment &    1000 & 2750 & 7.50 & 2 \\
SMS &          5574 & 7259 & 13.51 & 2 \\
chemical data &     3374 & 4860 & 65.45 & 5 \\
mushroom &     8124 & 119 & 21 & 2 \\
splice &     3190 & 287 & 60 & 2 \\
mnist    &     70000 & 784 & 150.08 & 10 \\
\hline
\end{tabular}
\end{table}

\subsection{Quality of clusters}

In this experiment we evaluated our method over various binary data sets, summarized in Table \ref{tab:datasets}, and compared its results with related methods listed at the beginning of this section. Since we considered classification data sets, we compared obtained clustering with reference partition. Their agreement was measured by Adjusted Rand Index (ARI), which is a well-known external validation index \cite{hubert1985comparing}. It attains maximal value 1 for identical partitions, while for random clustering\footnote{ARI might take negative values, in the case when produced partition is less compatible with reference grouping than a random assignment.} ARI = 0.

We used five text data sets: 20newsgroups, Farm-ads, SMS Spam Collection, Sentiment Labeled Sentences retrieved from UCI repository \cite{asuncionNewman2007} and Questions dataset taken from \cite{li2002learning}. Each data set was encoded in a binary form with use of the set-of-words representation. Set-of-words is one of the simplest vector representations of text. Given a dictionary of words, a document (or sentence) is represented as a binary vector, where coordinates indicate the presence or absence of words from a dictionary.

We considered a real data set containing chemical compounds acting on 5-HT$_{1A}$ receptor ligands \cite{warszycki13}. This is one of the proteins responsible for the regulation of the central nervous system. This data set was manually labeled by the expert in a hierarchical form. We narrowed that those classification tree down to 5 classes: tetralines, alkylamines, piperidines, aimides and other piperazines. Each compound was represented by its Klekota-Roth fingerprint, which encodes 4860 chemical patterns in a binary vector \cite{yap2011padel}.

We also took a molecular biology data set (splice), which describes primate splice-junction gene sequences. Moreover, we used data set containing mushrooms described in terms of physical characteristics, where the goal is to predict whether a mushroom is poisonous or edible. Both data sets were selected from UCI repository. Finally, we evaluated all methods on the MNIST data set \cite{lecun1998gradient}, which is a collection of handwritten digits made into black-and-white images. 

\begin{table}[t]
\footnotesize
\centering
\caption{Adjusted Rand Index of considered methods.}
\label{tab:ARI}
\begin{tabular}{lrrrrrrrr}
\hline
Data set & $\sec(0,\frac{1}{2})$ & $\sec(0,1)$ & $k$-medoids & Cluto & CML & ML \\ \hline
20newsgroups & 0.6337 & 0.8293 & 0.0012 & 0.7509 & 0.0991 & 0.0948 \\
farm-ads &     0.1958 & 0.2664 & 0.0192 & 0.2565 & 0.0468 & 0.0552 \\
questions &     0.0622 & 0.0622 & 0.0363 & 0.0926 & 0.0087 & 0.0274 \\
sentiment &    0.0667 & 0.0667 & 0.0153 & 0.0571 & 0.0064 & 0.0198 \\
SMS &          0.5433 & 0.5433 & 0.1081 & 0.5748 & 0.3063 & 0.3133 \\
chemical &     0.3856 & 0.4281 & 0.3724 & 0.3841 & 0.4472 & 0.4041 \\
mushroom &     0.6354 & 0.6275 & 0.626  & 0.6229 & 0.6354 & 0.6120 \\
splice &     0.7216 & 0.2607 & 0.1071 & 0.1592 & 0.4885 & 0.2442 \\
mnist    &     0.4501 & 0.395  & 0.3612 & 0.283  & 0.4277 & 0.4171 \\
\hline
\end{tabular}
\end{table}

All methods were run with the correct number of groups. Since the expected number of groups was given, {\sec} was run with $\beta = 0$ to prevent from clusters reduction. We examined its two parametrizations: (a) $T = \frac{1}{2}$, where a cluster representative is taken as the most probable point; (b) $T = 1$, where a representative is a zero vector.

The results presented in Table \ref{tab:ARI} shows significant disproportions between two best performing methods ({\sec} and Cluto) and other examined algorithms. The highest differences can be observed in the case of 20newsgroups, farm-adds and SMS data sets. In the case of the questions and sentiment data sets, neither method showed results significantly better than a random partitioning. Let us observe that these sets are extremely sparse, which could make the appropriate grouping of their examples very difficult. For the mushroom example, on the other hand, all methods seemed to perform equally good. Slightly higher differences can be observed on MNIST and chemical data sets, where ML and CML obtained good results. Finally, {\sec} with $T = \frac{1}{2}$ significantly outperformed other methods for splice.

Although {\sec}, ML and CML focus on optimizing similar cost functions, they use different algorithms, which could be the main reason for differences in their results. {\sec} applies an on-line Hartigan procedure, which updates clusters parameters at every switch, while ML and CML are based on EM algorithm and perform updates after the entire iteration. On-line updates allow for better model fitting and, in consequence, lead to finding better local minimums. This partially explains the more accurate clustering results of {\sec} compared to related  mixture models.

\begin{figure}[t]
\centering
\begin{tabular}{l|llllllllll}
&\includegraphics{figs/mnist/0.png} & \includegraphics{figs/mnist/1.png} &
\includegraphics{figs/mnist/2.png} & \includegraphics{figs/mnist/3.png} &
\includegraphics{figs/mnist/4.png} & \includegraphics{figs/mnist/5.png} &
\includegraphics{figs/mnist/6.png} & \includegraphics{figs/mnist/7.png} &
\includegraphics{figs/mnist/8.png} & \includegraphics{figs/mnist/9.png}\\
\hline
0& 5634 & 6 & 12 & 211 & 15 & 24 & 182 & 4 & 778 & 37\\
1& 0 & 4140 & 9 & 5 & 2 & 3682 & 10 & 5 & 22 & 2\\
2& 91 & 616 & 4706 & 159 & 92 & 449 & 384 & 148 & 321 & 24\\
3& 48 & 613 & 260 & 4669 & 149 & 118 & 55 & 94 & 1083 & 52\\
4& 15 & 237 & 24 & 2 & 3444 & 208 & 246 & 30 & 65 & 2553\\
5& 113 & 270 & 11 & 2359 & 349 & 725 & 153 & 26 & 1264 & 1043\\
6& 110 & 478 & 111 & 33 & 34 & 227 & 5548 & 7 & 323 & 5\\
7& 25 & 289 & 24 & 3 & 850 & 457 & 4 & 4937 & 47 & 657\\
8& 70 & 451 & 57 & 1340 & 299 & 319 & 64 & 59 & 3798 & 377\\
9& 54 & 313 & 13 & 111 & 3224 & 112 & 15 & 288 & 59 & 2769\\
\end{tabular}
\caption{Confusion matrix and clusters representatives returned by applying {\sec} to the MNIST data set. Rows correspond to reference digits, while columns correspond to clusters produced by {\sec}.}
\label{fig:mnist}
\end{figure}

To further illustrate the effects of {\sec} we present its detailed results obtained on the MNIST data set. Figure \ref{fig:mnist} shows a confusion matrix and clusters representatives (first row) produced by {\sec} with $T=\frac{1}{2}$. It is clear that most of the clusters representatives resemble actual hand-drawn digits. It can be seen that {\sec} had trouble distinguishing between the digits 4 and 9, mixing them up a bit in their respective clusters. The digit 5 also could not be properly separated, resulting in its scatter among other clusters. The digit 1 occupied two separate clusters, once for being written vertically and once for being written diagonally. Nevertheless, this example showed that {\sec} is able to find reasonable clusters representatives that reflect their content in a strictly unsupervised way.

To summarize the results, we ranked the methods on each data set (the best performing method got rank 1, second best got rank 2, etc.). Figure \ref{fig:ranking} presents a box plot of ranks averaged over all data sets. The vertical lines show the range of the ranks, while the horizontal line in the middle denotes the median. It can be seen that both variants of {\sec} were equally good and outperformed other methods. Although the median rank of cluto was only slightly worse, its variance was significantly higher. This means that this model was not well suited for many data sets.

\begin{figure}[t]
\centering
\includegraphics[width=0.6\textwidth]{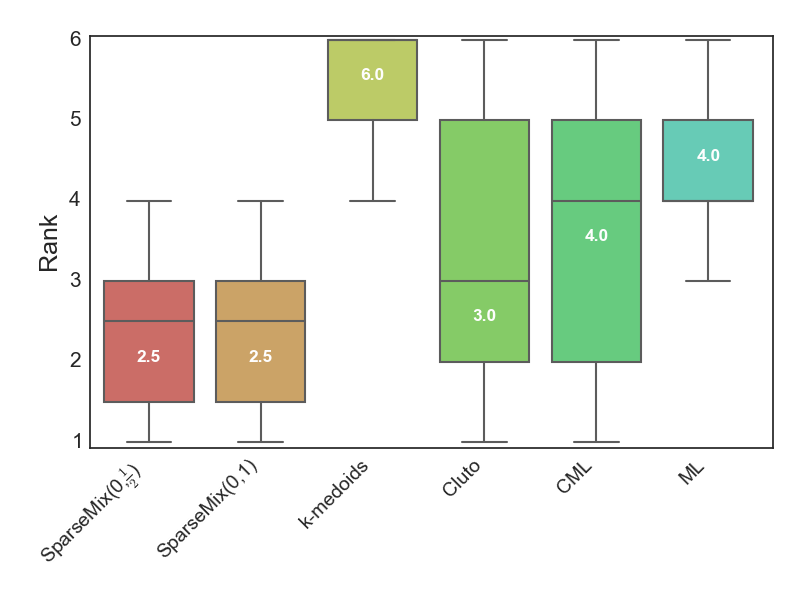}
\caption{Ranking of examined methods averaged over all data sets.}
\label{fig:ranking}
\end{figure}

\subsection{Time comparison}

In real-world applications, the clustering algorithm has to process large portions of data in a limited amount of time. In consequence, high computational complexity may disqualify a method from practical usage. In this experiment we focus on comparing the evaluation time of our algorithm with other methods. We tested the dependence on the number of data points as well as on the number of attributes. For the illustration, we considered the chemical data set from previous subsection.

\begin{figure}[t]
\centering
\includegraphics[width=0.9\textwidth]{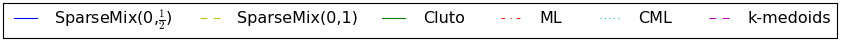}
\subfigure[]{\label{fig:timeNumFarm}\includegraphics[width=0.48\textwidth]{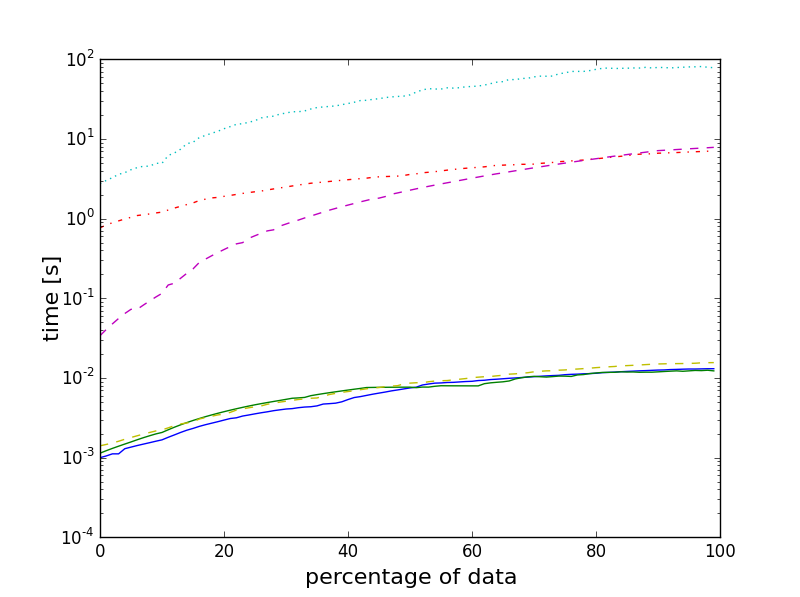}}
\subfigure[]{\label{fig:timeDimFarm}\includegraphics[width=0.48\textwidth]{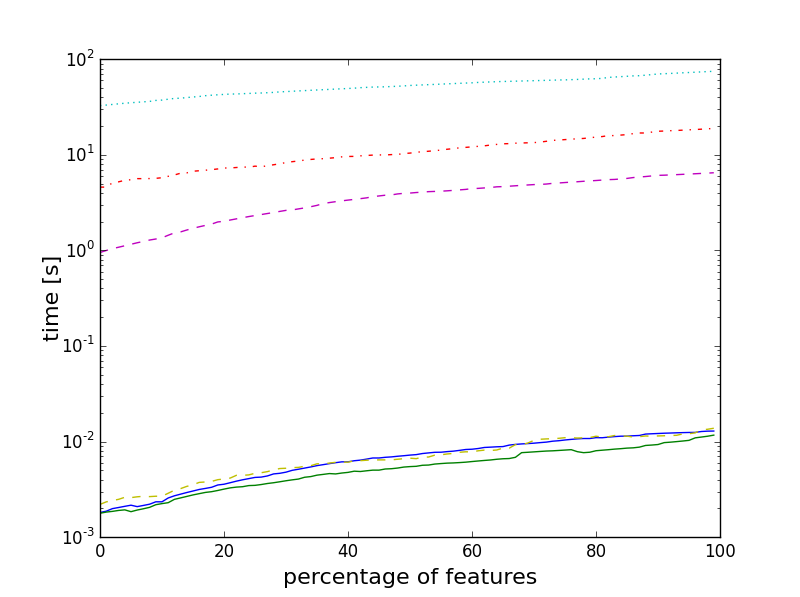}}
\caption{The running times with respect to the number of points (\ref{fig:timeNumFarm}) and to the number attributes (\ref{fig:timeDimFarm}) presented in logarithmic scale.}
\label{fig:time}
\end{figure}

In the first scenario, we randomly selected a subset of data containing a given percentage of instances, while in the second simulation, we chose a given percentage of attributes. The clustering algorithms were run on such prepared subsets of data. The results presented in Figure \ref{fig:time} show that both versions of {\sec} were as fast as the Cluto package, which is an optimized software for processing large data sets. The other algorithms were significantly slower. It might be caused both by a specific clustering procedure as well as by an inefficient programming language used for their implementations. 

\begin{figure}[t]
\centering
\subfigure[$T=0.5$]{\label{fig:energy1} \includegraphics[width=0.48\textwidth]{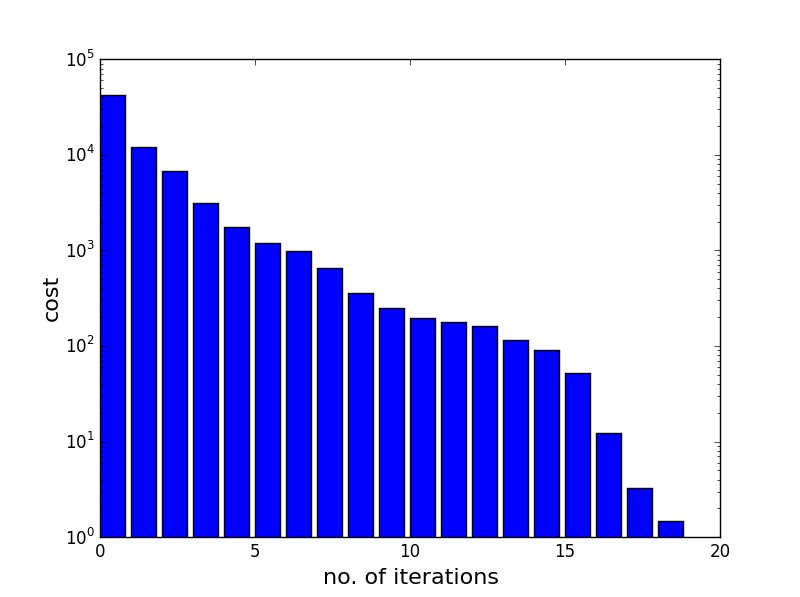}}
\subfigure[$T=1$]{\label{fig:energy2} \includegraphics[width=0.48\textwidth]{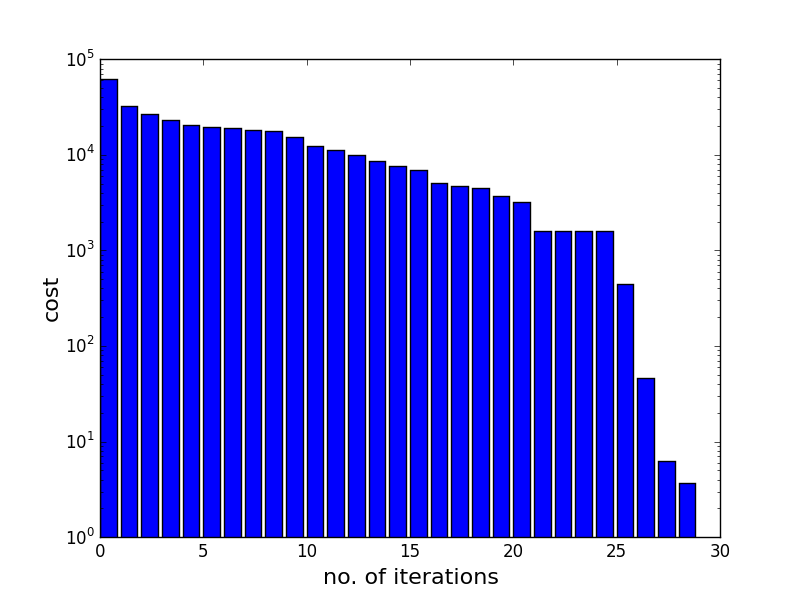}}
\caption{The difference between the cost in each iteration and the cost of the final clustering of {\sec} with $T=0.5$ (\ref{fig:energy1}) and $T=1$ (\ref{fig:energy1}) given in logarithmic scale. }
\label{fig:energy}
\end{figure}

The interesting thing is that {\sec} with $T=\frac{1}{2}$ was often slightly faster than {\sec} with $T = 1$, which at first glance contradicts the theoretical analysis of our algorithm. To investigate this observation, we counted the number of iterations needed for convergence of both methods. It is clear from the Figure \ref{fig:energy} that {\sec} with $T=\frac{1}{2}$ needed less iterations to find a local minimum than with $T=1$, which fully explains the relation between their running times. {\sec} with $T=\frac{1}{2}$ needed less than 20 iterations to converge. Since the scale of the graph is logarithmic, the differences in its cost decreased exponentially. Such a fast convergence follows from that fact that the {\sec} cost function can be optimized by applying on-line Hartigan algorithm (this is computationally impossible to use an on-line strategy for CML or ML models).

\subsection{Clustering stability}

In this experiment, we examined the stability of considered clustering algorithms upon the changing of the data. More precisely, we tested whether a method was able to preserve clustering results when some data instances or attributes were removed. In practical application high stability of an algorithm can be used to speed up the clustering procedure. If a method does not change its result using a lower number of instances or attributes, we can safely perform clustering on a reduced data set and assign the remaining instances to the nearest clusters. We again used the chemical data set for this experiment. In this simulation we only ran {\sec} with $T = \frac{1}{2}$ (our preliminary studies showed that parameter $T$ does not visibly influence overall results).

First, we investigated the influence of the number of instances on the clustering results. For this purpose, we performed the clustering of the whole data set $X$ and randomly selected $p$ percentage of its instances $X^p$ (we considered $p = 0.1, 0.2, \ldots, 0.9$). Stability was measured by calculating ARI between the clusters $X_1^p,\ldots,X_k^p$ created from the selected fraction of data $X^p$ and from the whole data set (restricted to the same instances), i.e. $(X_1 \cap X^p), \ldots, (X_k \cap X^p)$. To reduce the effect of randomness, this procedure was repeated 5 times and the final results were averaged. The results presented in Figure \ref{fig:stabRev1} show that for a small number of data points Cluto gave the highest stability, but as the number of instances grows {\sec} performed better. 

In the second part of the experiment, we examined how the clustering results changed when a smaller number  of attributes were taken into account. The procedure was analogical to the previous one: we compared the clustering results obtained on the whole data set with the ones produced on data set with randomly selected $p$ percentage of attributes (as before we considered $p = 0.1, 0.2, \ldots, 0.9$). One can observe in Figure \ref{fig:stabRev2} that {\sec} obtained the highest stability on all subsets of data. The performance of Cluto was significantly worse than previously -- in particular, ML showed higher stability.

\begin{figure}[t]
\centering
\includegraphics[width=0.8\textwidth]{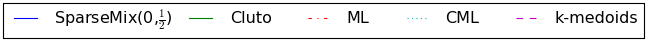}
\subfigure[]{\label{fig:stabRev1} \includegraphics[width=0.48\textwidth]{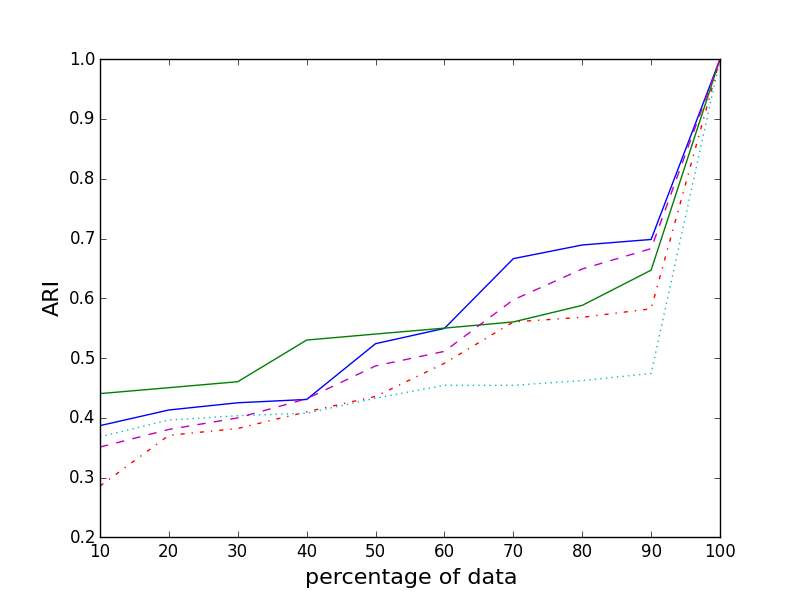}}
\subfigure[]{\label{fig:stabRev2} \includegraphics[width=0.48\textwidth]{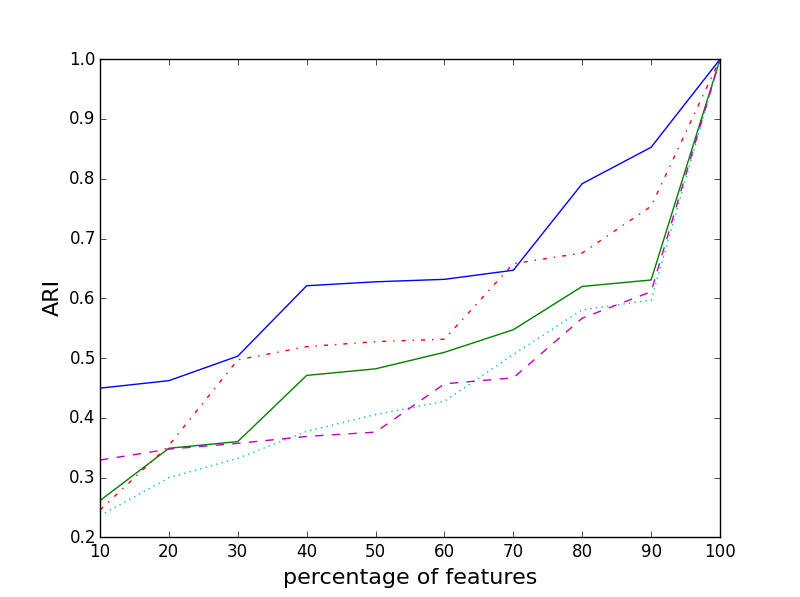}}
\caption{Compatibility between clustering results obtained on the whole data set and its fixed percentage.}
\label{fig:stabRev}
\end{figure}

\subsection{Sensitivity to imbalanced data}

In the following section, we examined sensitivity of the clustering algorithms to data imbalance. This extends theoretical analysis presented in Example \ref{ex:reduction}. 

First, we examined whether the algorithm is able to detect clusters of different sizes. For this purpose, we considered a data set $X \subset \{ 0, 1 \}^D$, for $D = 100$ and $|X| = 1000$, generated from a distribution 
$$
\omega P(p, \alpha, d) + (1-\omega)P(p, 1-\alpha, d),
$$ 
where $p = 0.1$, $\alpha = 0.05$ and $d = D/2$ were fixed and $\omega$ changed from $0$ to $1$. We refer the reader to Example \ref{ex:reduction} for the definition of distribution $P$ and its interpretation. The mixing parameter $\omega$ induces the fraction of examples produced by these two sources. We would expect that a clustering method will be able to discover true distributions, so the resulting sizes of the clusters will be, roughly, $\omega |X|$ and $(1-\omega)|X|$. However, as $\omega$ approaches either to 0 or 1, the data becomes very imbalanced, which makes the task of separating them more difficult. We considered {\sec} with $\beta = 0$ and $\beta = 1$ to account for different costs of maintaining clusters (our preliminary studies showed that parameter $T$ does not visibly influence overall results and thus we used $T=\frac{1}{2}$).

\begin{figure}[t]\centering
\includegraphics[width=0.9\textwidth]{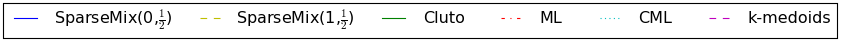}
\subfigure[]{\label{fig:bal1}\includegraphics[width=0.48\textwidth]{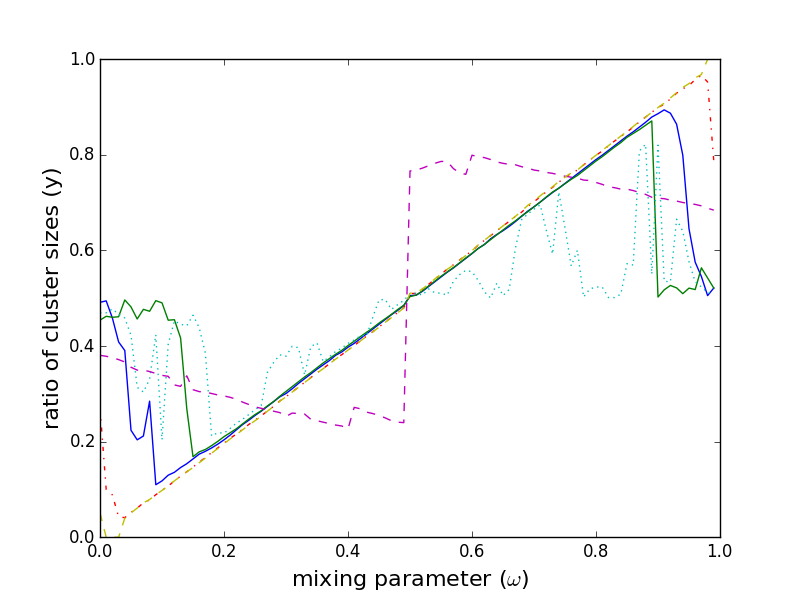}}
\subfigure[]{\label{fig:bal2}\includegraphics[width=0.48\textwidth]{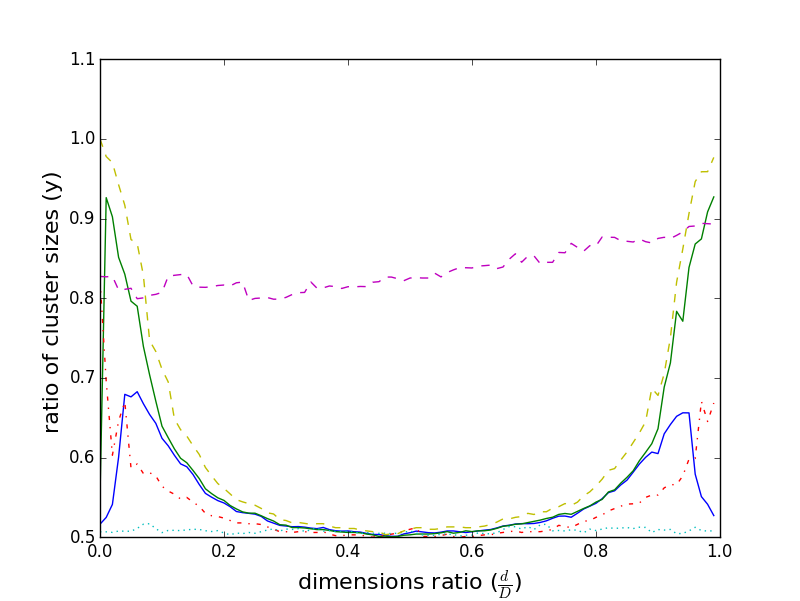}}
\caption{The ratio of cluster sizes for data sets generated from imbalanced sources: we varied the number of instances generated from each source \ref{fig:bal1} and the number of attributes characteristic for each source \ref{fig:bal2}.}
\end{figure}

Figure \ref{fig:bal1} reports the fraction of the data that belongs to the first cluster. The optimal solution is $y = \omega$. We can see that the $k$-medoids method did not respond to the changes in $\omega$. Other algorithms seemed to perform well on the mid section, but gradually steered off the optimal line as $\omega$ approached to 0 or 1. The highest robustness to imbalanced data was obtained by ML and {\sec} with $\beta = 1$ (cost of clusters identification was taken into account). If the cost of maintaining clusters is not considered ($\beta = 0$), then {\sec} tends to create more balanced groups. These results are consistent with a discussion outlined before Definition \ref{def:secCost} and in Example \ref{ex:reduction}.

In the second experiment, we investigated the influence of attributes imbalance on the clustering results. For this purpose we sampled a data set from the mixture of distributions given by:
$$
\frac{1}{2}P(p, \alpha, d) + \frac{1}{2}P(p, 1-\alpha, d),
$$
where $p = 0.1$, $\alpha = 0.05, |X| = 1000$ and $D = 100$ were constants, while $d$ ranged from 0 to $D$. When $d < D$, then the second source is identified by a smaller number of bits than the first one. Therefore, by changing the value of the parameter $d$ we scale the number of features characteristic for components. This time we expect that the clusters will remain equally-sized regardless of the parameter $d$.

Figure \ref{fig:bal2} presents the fraction of data that belongs to the first cluster (perfect solution is given by $y = \frac{1}{2}$).   It can be observed that {\sec} with $\beta = 1$ was very sensitive to attributes imbalance. According to conclusion given in Example \ref{ex:reduction}, the cost of encoding elements within a cluster is outweighed by the cost of clusters identification, as $\alpha \to 0$ (or $1$), which results in the reduction of the lighter group. Since the data is sampled from an underlying distribution and {\sec} flows to a local minimum, some attempts result in creating one group, while the others produce two clusters, which explains why the corresponding line is not equal to 1, for $\alpha < 0.2$. This effect was not apparent when {\sec} used $\beta = 0$, because there was no cost of creating an additional cluster. Its results were comparable to ML and CML, which also do not use any cost of clusters identification.

\section{Conclusion}

In this paper, we proposed {\sec}, a new approach for clustering of sparse high dimensional binary data. Our results showed that {\sec} is not only more accurate than related model-based clustering algorithms, but also significantly faster. Its evaluation time is comparable to algorithms implemented in the Cluto package, the software optimized for processing large data sets, but its clusters quality is better. {\sec} provides a description of each cluster by its representative and the dispersion from this representative. Experimental results demonstrated that representatives obtained for the MNIST data set provide high resemblance with original representatives of handwritten digits. The model was theoretically analyzed.

\appendix

\section{Proof of Theorem \ref{thm:compression}} \label{app:compression}

We will show that $\cost_{T_2}(X) - \cost_{T_1}(X) \geq 0$. We have:
\begin{align*}
\cost_{T_2}(X) - \cost_{T_1}(X) & = \sum_{i=1}^D \left(q_i(T_1)\log Q_i(T_1) - q_i(T_2) \log Q_i(T_2) \right)\\
& = \sum_{i: p_i \leq T_1} \left(p_i \log \frac{p_i}{Z(T_1)} - p_i \log \frac{p_i}{Z(T_2)} \right)\\ 
& + \sum_{i: T_1 \leq p_i \leq T_2} \left((1-p_i) \log \frac{1-p_i}{Z(T_1)} - p_i \log \frac{p_i}{Z(T_2)} \right)\\
& + \sum_{i: p_i \geq T_2} \left((1-p_i) \log \frac{1-p_i}{Z(T_1)} - (1-p_i) \log \frac{1-p_i}{Z(T_2)} \right)\\
& =  \log \frac{Z(T_2)}{Z(T_1)} \left(\sum_{i: p_i \leq T_1}  p_i  + \sum_{i: p_i \geq T_2} (1-p_i)  \right)\\
& + \sum_{i: T_1 \leq p_i \leq T_2} (1-p_i) \log (1-p_i) - p_i \log p_i \\
& + \sum_{i: T_1 \leq p_i \leq T_2} p_i \log Z(T_2) - (1-p_i) \log Z(T_1).\\
\end{align*}

Observe that $Z(T_1) \leq Z(T_2)$ and thus $\log \frac{Z(T_2)}{Z(T_1)} \geq 0$. Consequently,
\begin{align*}
\cost_{T_2}(X) - \cost_{T_1}(X) \geq  \sum_{i: T_1 \leq p_i \leq T_2}&  \left((1-p_i) \log (1-p_i) - p_i \log p_i \right. \\
& \left. + p_i \log Z(T_2) - (1-p_i) \log Z(T_1) \right).
\end{align*}

The above expression is non-negative if only the function:
$$
f(p) = p (-\log p) - (1-p) (-\log(1-p)) + p \log Z(T_2) - (1-p) \log Z(T_1)
$$
is non-negative for every $T_1 \leq p \leq T_2$.

A derivative of $f$ equals:
\begin{align*}
f'(p) & = -\log p(1-p) + \log (Z(T_1) Z(T_2)) - 2\\
& = -\log p(1-p) + \log \frac{Z(T_1)Z(T_2)}{4}.
\end{align*}
It is greater than zero when:
$$
\log \frac{Z(T_1)Z(T_2)}{4} \geq \log p(1-p),
$$
which simplifies to:
$$
\frac{Z(T_1)Z(T_2)}{4} \geq p(1-p).
$$

Since $Z(T_2) \geq Z(T_1) \geq 1$, then $Z(T_1)Z(T_2) \geq 1$. Moreover, due to the fact that $p(1-p) \leq \frac{1}{4}$, for $p \in [0,1]$, we have
$$
\frac{Z(T_1)Z(T_2)}{4} \geq \frac{1}{4} \geq p(1-p),
$$
which means that for every $p$ satisfying $T_1\leq p \leq T_2$ the function $f$ is nondecreasing. Finally,
$$
f(\frac{1}{2}) = \frac{1}{2} \log \frac{Z(T_2)}{Z(T_1)} \geq 0,
$$
and consequently, $f(p) \geq 0$, for $\frac{1}{2} \leq T_1\leq p \leq T_2 \leq 1$. This means that the best compression is achieved for $T = \frac{1}{2}.$

\section{Clusters reduction - details of Example \ref{ex:reduction}} \label{app:reduction}

We compare the cost of using a single cluster for all instances with the cost of splitting the data into two optimal groups (first $\omega |X|$ examples are assigned to the first group while the remaining instances are assigned to the second cluster). For the convenience of calculations, we define the function:
$$
D(x,d) := x d + (1-x) (D-d),
$$

The conditional probability $\Q_i^1$ that the $i$-th position holds a non-zero value in the first cluster equals:
$$
Q^1_i =  \left\{ 
\begin{array}{ll}
 \frac{\alpha}{D(\alpha,d)}, & j =1,\ldots,d,\\
 \frac{1-\alpha}{D(\alpha,d)}, & j = d+1,\dots,D.
\end{array}
\right.
$$
while for the second group:
$$
Q^2_i =  \left\{ 
\begin{array}{ll}
 \frac{1-\alpha}{D(1-\alpha,d)}, & j =1,\ldots,d,\\
 \frac{\alpha}{D(1-\alpha,d)}, & j = d+1,\dots,D.
\end{array}
\right.
$$
Then, the cost of using two clusters equals:
\begin{multline} \label{eq:cost:2}
\cost(X_1, X_2)  = \omega (-\log \omega) + (1-\omega) (-\log (1-\omega) )\\
 - \omega \left(d \alpha p \log  \frac{\alpha}{D(\alpha,d)} + (D-d) (1-\alpha) p \log \frac{1-\alpha}{D(\alpha,d)} \right)\\
 - (1-\omega) \left(d (1-\alpha) p \log \frac{1-\alpha}{D(1-\alpha,d)} +(D-d) \alpha p\log \frac{\alpha}{D(1-\alpha,d)} \right) \\
 =p \left( \omega D(\alpha,d) \log D(\alpha,d) + (1-\omega) D(1-\alpha,d) \log D(1-\alpha,d) \right.\\
\left. - \alpha D(\omega,d) \log \alpha - (1-\alpha) D(1-\omega,d) \log (1-\alpha) \right) + h(\omega,1-\omega).
\end{multline}

To calculate the cost of one cluster, let us put $\beta = \omega \alpha+ (1-\omega) (1-\alpha)$. Then, $(1-\beta) = \omega (1-\alpha) + (1-\omega) \alpha$ and the conditional probability $Q_i$ is given by
$$
Q_i =  \left\{ 
\begin{array}{ll}
\frac{\beta}{D(\beta,d)}, & j =1,\ldots,d,\\
\frac{1-\beta}{D(\beta,d)}, & j = d+1,\dots,D.
\end{array}
\right.
$$
The cost of one cluster can be written as follows:
\begin{multline} \label{eq:cost:1}
\cost(X)  = -d p \beta \log \frac{\beta}{D(\beta,d)} -(D-d) p (1-\beta) \log \frac{1-\beta}{D(\beta,d)}\\
 =p (D(\beta,d) \log D(\beta,d) -d \beta \log \beta -(D-d) (1-\beta) \log (1-\beta)).\\
\end{multline}

It is more profitable to use one cluster instead of two if \eqref{eq:cost:1} is lower than \eqref{eq:cost:2}. Since it is difficult to analyze this relation in general, we consider three special cases:
\begin{enumerate}
\item {\bf Dimensions are balanced.} We fix the dimension parameter $d = \frac{1}{2} D$. Then $D(\alpha, d) = D(\omega,d) =D(\beta,d)= d$ and the formula \eqref{eq:cost:2} simplifies to:
$$
\cost(X_1,X_2) = pd \left( \log d + h(\alpha, 1-\alpha) \right) + h(\omega,1-\omega),
$$
while \eqref{eq:cost:1} equals:
$$
\cost(X)  = pd \left( h(\beta, 1-\beta) + \log d \right).
$$

\item {\bf Sources are balanced.} If we fix the mixing proportion $\omega= \frac{1}{2}$ then the cost of two clusters is:
\begin{multline*}
\cost(X_1,X_2) = -\frac{1}{2}p \left( h(D(\alpha,d), D(1-\alpha, d)) + D h(\alpha,1-\alpha) \right) + \log 2
\end{multline*}
and for one cluster we have
$$
\cost(X) =  \frac{1}{2} p D \log D.
$$

\item {\bf Both dimensions and sources are balanced.} For fixed $d = \frac{1}{2}D$ and $\omega = \frac{1}{2}$ the cost of two clusters is given by
$$
\cost(X_1,X_2) =d p (h(\alpha, 1 - \alpha) + \log d)+ \log 2,
$$
while for one cluster we have
$$
\cost(X) =p d \log D.
$$

\end{enumerate}

\section*{Acknowledgement}

This research was partially supported by the National Science Centre (Poland) grant no. 2014/13/N/ST6/01832 and grant no. 2015/19/B/ST6/01819.

\section*{References}

\bibliographystyle{plain}      
\bibliography{references}   

\end{document}